\newtheorem{theorem}{Theorem}
\title{Multi-Agent Path Finding with Real Robot Dynamics and Interdependent Tasks for Automated Warehouses}
\authors{
Vassilissa Lehoux-Lebacque$^{\star}$ \authsep
Tomi Silander$^{\star}$ \authsep
Christelle Loiodice \authsep
Seungjoon Lee \authsep
Albert Wang \authsep
Sofia Michel
}
\website{}
\begin{abstract}
Multi-Agent Path Finding (MAPF) is an important optimization problem underlying the deployment of robots in automated warehouses and factories. Despite the large body of work on this topic, most approaches make heavy simplifications, both on the environment and the agents, which make the resulting algorithms impractical for real-life scenarios. In this paper, we consider a realistic problem of online order delivery in a warehouse, where a fleet of robots bring the products belonging to each order from shelves to workstations. This creates a stream of inter-dependent pickup and delivery tasks and the associated MAPF problem consists of computing realistic collision-free robot trajectories fulfilling these tasks. To solve this MAPF problem, we propose an extension of the standard Prioritized Planning algorithm to deal with the inter-dependent tasks (Interleaved Prioritized Planning) and a novel Via-Point Star (VP*) algorithm to compute an optimal dynamics-compliant robot trajectory to visit a sequence of goal locations while avoiding moving obstacles. We prove the completeness of our approach and evaluate it in simulation as well as in a real warehouse.

\end{abstract}
\pgfplotsset{compat=1.18}
\begin{document}

\maketitle

\let\thefootnote\relax\footnotetext{This article is an extended version of article 1421 of the ECAI 2024 conference.}
\let\thefootnote\svthefootnote



\section{Introduction}\label{sec:intro}

Multi-Agent Path Finding (MAPF)~\citep{stern_multi-agent_2019} is the problem of planning a set of collision-free paths for a team of agents to reach one, or a sequence of goal locations, with minimal travel time.
With the impressive progress of AI and robotics research over the last decade, an increasing number of real-world applications are based on multi-agent systems and require solving some MAPF problem. Examples include automated warehouses~\citep{wurman_coordinating_2008, li_lifelong_2021-1}, video games~\citep{li_moving_2020}, UAV traffic management~\citep{ho_multi-agent_2019} and 
autonomous vehicles~\citep{dresner_multiagent_2008, li_intersection_2023}. 
While there exists quite extensive literature on MAPF, most works consider a simplified setting where the environment is modeled as a 4-neighbor grid where each agent occupies one cell at a time, and at each discrete time step, can either move to a neighboring cell or wait in place~\citep{stern_multi-agent_2019}. Even in this simplified setting, MAPF is already NP-hard~\citep{yu_planning_2013}.

We target a warehouse scenario in which large robots move heavy objects in a spatially constrained workspace. 
Additionally, we consider that orders are received throughout the day and each order consists of multiple products that robots need to pick up at specific shelves and deliver to a workstation. Each workstation can process only one order at a time.
This creates a stream of inter-dependent pickup and delivery tasks. 
Such a scenario, and its corresponding lifelong MAPF variant, features several characteristics which are not typically taken into account in previous works. 
Due to the weight of the robots, accelerating (or decelerating) to full (or zero) linear and angular speed may take many seconds (and meters). Since transported objects are heavy, the speed and acceleration depend on the load of the robot. 
When path-finding is planned on a graph, a heavy robot cannot necessarily stop from a full speed to the closest node. Furthermore, a large robot also often occupies multiple graph nodes and edges. 
Large robots have seldom space to bypass each other or even turn in place in narrow aisles. 
If the robot can pick up and drop off objects only from one side, the plan should anticipate the turns so that the robots enter the aisles with the right orientation for pickups/drop-offs.
This more realistic setting renders the majority of methods devised for grid-based environments ill-suited for our scenario. Simplified assumptions on robot dynamics also yield trajectories that can quickly lead to collisions when executed by realistic robots (see Section ~\ref{sec:need-for-dynamics}).

In this paper, we introduce a MAPF solution taking the aforementioned characteristics into account. 
Our approach is based on the classical Prioritized Planning (PP) algorithm~\citep{silver_cooperative_2005}, which consists of ordering the agents in a certain priority order, then computing the shortest path for each agent, avoiding the trajectories of the previously planned agents (considered as moving obstacles). 
PP is well-suited to our context as the shortest path computation is done for each robot separately and can in principle accommodate kinematic constraints. 
However, PP is not adapted to handle the collaboration between robots that is needed for our interdependent tasks. We propose an extension of PP, called {\it Interleaved Prioritized Planning}, where the priorities are dynamically assigned throughout the planning process. We prove the completeness of the algorithm under simple assumptions. 
Moreover we introduce a novel shortest path algorithm (named VP$^*$) to compute the optimal trajectory for a robot to visit a sequence of via points, while avoiding moving obstacles and satisfying the kinematic constraints. Similarly to A$^*$ algorithms~\citep{hart_formal_1968}, VP$^*$ is a goal-directed tree search algorithm that relies extensively on a heuristic evaluation of the minimal cost of a subpath to the goal. We obtain this evaluation by computing the optimal robot trajectory when ignoring the collisions. While this simple kinematics-constrained shortest path problem is generally NP-hard~\citep{ardizzoni_solution_2023}, by introducing a tailored routing multi-graph and fixing the robots' speed profiles we obtain a polynomial problem and therefore an efficient evaluation heuristic for VP$^*$.
We evaluate our approach through extensive experiments in simulation and preliminary tests in a real warehouse. 
Those experiments show the necessity of accounting for the real dynamics in our setting and confirm the relevance of the different components of our method.
In particular, we study the impact of different layout configurations (more or less constrained environments) and a more robust version of our method using time margins.

In summary, we propose a novel approach to address the lifelong MAPF problem in a realistic and challenging warehouse setting: with constrained space, interdependent tasks and complex real robot dynamics. Our contributions are the following:
\begin{itemize}
    \item We introduce the Interleaved Prioritized Planning algorithm for MAPF with inter-dependent tasks and prove its completeness under simple assumptions.
    \item We propose the Via-Point Star (VP$^*$) algorithm to compute a shortest single-robot trajectory, satisfying the kinematics constraints, that visits a sequence of locations while avoiding moving obstacles.
    \item We evaluate our approach in simulation and provide ablations of the main components.
    \item Finally we present preliminary results of applying our approach in a real setting.
\end{itemize}

\section{Related Works}
\paragraph{MAPF and MAPD for automated warehouses.}
Our problem is closely related to the lifelong Multi-Agent Pickup and Delivery (MAPD) problem \citep{ma_lifelong_2017}, which is generally treated as a sequence of MAPF problems. There is extensive work on MAPF \citep{silver_cooperative_2005, standley_finding_2010, felner_search-based_2017, stern_multi-agent_2019} with several works focusing on warehouse applications (e.g. \citep{wurman_coordinating_2008, li_lifelong_2021-1}). However all the above approaches use many simplifying assumptions mentioned in Section~\ref{sec:intro}.

\paragraph{MAPF with realistic assumptions.}
In order to deploy MAPF solvers for real-world scenarios, researchers and practitioners have developed two main strategies. The first one is to adapt existing algorithms to handle specific aspects of the real applications. For example, \citet{zhang_efficient_2023} extend some MAPF algorithms to explicitly account for turn actions but in the usual grid-world environment with discretized time steps. \citet{ma_lifelong_2018} present an approach for lifelong MAPD that takes into account the robot's translational and rotational speed but assumes infinite acceleration and deceleration (i.e., no inertia). \citet{li_multi-agent_2019} consider agents of arbitrary shape, but agents still move in unitary time steps from one vertex to another of the graph, hence not accounting for the agents' dynamics. 
The second strategy is to use so-called {\it execution frameworks} which allow to execute a given MAPF solution and update it if necessary. These frameworks are generally agnostic to the underlying MAPF solver. For example, \citet{hoenig_multi-agent_2016} and \citet{honig_persistent_2019} propose post-processing the output of a standard MAPF solver in order to make the trajectories satisfy the kinematics constraints. However the produced plans might not be as effective since these constraints were not taken into account by the planning algorithm.

\paragraph{MAPF in uncertain environments.}
Even with an accurate model of the robots, unexpected time variations in robot actions will happen. To mitigate the effect of this uncertainty, \citet{atzmon_robust_2018} propose $k$-robust MAPF that guarantees the feasibility of the plan even if the agents are delayed by up to $k$ steps.
We propose a similar robustness feature by introducing time margins that ensure robustness of the plan to $k$ seconds of earliness or tardiness of the agents. While historically most of the MAPF research is evaluated on simulators with simplistic assumptions, tests with more advanced simulators such as ROS Gazebo are becoming more common~\citep{varambally_which_2022}. For example, \citet{honig_persistent_2019} conduct a mixed reality tests using iCreate robots. However, unlike in our case, robots' dynamics do not deviate much from the simple assumptions under which the plans are created and the transfer is thus quite successful. In our case, applying these simplifying assumptions quickly lead to collisions (see Section~\ref{sec:need-for-dynamics}).

\section{Lifelong MAPF with Interdependent Tasks}\label{sec:lifelong_MAPF}
In this section, we define more formally our setting and the MAPF problem we want to address.

\paragraph{Agents.}
Agents (or {\it robots}) in this work refer to differential drive robots that can move forward, backward and turn in place. Robots are also equipped with a mechanism to pick up (or drop off) an object at a shelf or workstation, which can only be executed when robot is at a specific orientation and zero speed.
We assume that a dynamics model for the robots is given, so that the trajectory and travel time to go from an initial state (e.g., position and speed) to target state, as well as the times to perform pickups and drop-offs, can be computed.

\paragraph{Warehouse Graph.} We represent the workspace of the warehouse as a general directed graph $G_w = (V_w, A_w)$.
The vertices (or {\it nodes}) are associated with the physical locations where the robot can either stop to pick up and drop off objects (in front of shelves and workstations) or turn (in case there is enough space). The arcs (or {\it edges}) represent the segments to be traveled between the vertices.
Note that we do not make the standard assumption that the graph is a regular grid, therefore not imposing any restriction on the warehouse layout nor the distance between vertices. Such generality leads to cases where a robot may occupy multiple vertices and arcs simultaneously.
Although this graph is not used directly by our routing algorithm, it represents the input data used to build the more sophisticated routing graph described in Section~\ref{sec:routing_graph}.

\paragraph{Orders and Tasks.} We define an {\it order} as a set of objects to be picked up at some shelves and delivered to a (single) workstation; or the reverse, several objects at a workstation that need to be delivered at specific shelves. We assume that orders arrive throughout the day. Formally, an order $o \in \mathcal{O}$ with $k$ objects is defined as a tuple  $(r^o, \{(p_1^o, d_1^o) \dots, (p_k^o, d_k^o)\})$ where $r^o$ is the release date, $p_j^o$ the vertex corresponding to the location of the shelve where the $j$\mbox{-}th object must be picked up or delivered and $d_j^o$ the duration of the corresponding pickup or delivery action.
A duration for the action (pickup or delivery) performed at the workstation for each object can also be defined, either as workstation or object specific.
We define a {\it task} as an agent picking up an object at an initial location and delivering it at a target location. The set of tasks that correspond to a given order are said to be {\it interdependent} because they share the same workstation as initial or target location and must all be finished before a workstation can be used for another order.

\paragraph{Lifelong MAPF with Interdependent Tasks.}
Given a set of agents, a warehouse graph and a stream of orders with their associated tasks, the goal is to find a sequence of collision-free trajectories for the agents to execute their tasks, such that only one order is handled at a workstation at a time and the throughput is maximized.

\begin{figure}[t]
    \begin{center}
        \includegraphics[scale=0.45]{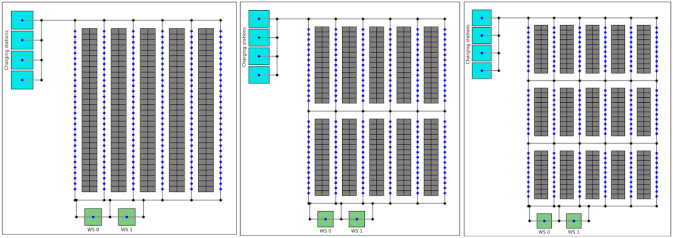}
        \caption{Examples of warehouse layouts with the physical warehouse graph. Shelves are shown in gray, charging stations/waiting areas are shown in cyan and the workstations are shown in green. }\label{fig:layout}
    \end{center}
\end{figure}

\section{Interleaved Prioritized Planning with VP*}
The Lifelong MAPF problem with Interdependent Tasks defined above includes two optimization subproblems: (i) Task Assignment: which agent should execute each task, and which workstation should be used for each task (order)? (ii) MAPF: how should the agents move (in space and time) to execute their tasks? 

For simplicity we consider these two interdependent subproblems separately. In addition, to deal with the stream of online orders, we solve the subproblems at regular, user-specified, intervals. Similarly to Receding Horizon Control \citep{borrelli_predictive_2017}, the idea is to compute a plan and start executing it; then plan again, given the current state of the system, including the newly released orders.
Hence without loss of generality, in the following, we focus on one planning iteration and assume that the orders are known and the objective is to minimize the makespan, i.e., the maximum completion time of all the known tasks.

In this section, we first present our task assignment strategy, and then our MAPF algorithm, Interleaved Prioritized Planning (IPP), that we prove to be complete.
We then describe the details of the Via-Point Star (VP$^*$) algorithm that we use for routing agents without collision, starting by the graph model and collision checking mechanisms that enable us to manage the robots' dynamics.

\subsection{Task Assignment Heuristic}
To optimize the task assignment, we solve the underlying {\it idealized} scheduling problem where we assume that the robots always use the shortest paths (i.e., ignoring potential collisions).
We use a rule-based heuristic, inspired by the classical priority dispatching rule approaches for scheduling \citep{haupt_survey_1989}. 
More precisely, we decompose the system state $S$ into the agents' states $S^\text{agents} := \{(\tau^a, v^a): a \in \mathcal{A}\}$ where $\tau^a$ is the time when agent $a$ will be available and $v^a$ its position (vertex) at $\tau^a$; and the workstations' states $S^\text{ws} := \{\tau^w: w \in \mathcal{W}\}$ with  time $\tau^w$ indicating when the workstation $w$ will be available. 
At the beginning of the day, agents are at their initial position and availability times are zero.
Orders are sorted in a First In First Out (FIFO) fashion, by increasing released date. 
For each order, we first assign the earliest available workstation.
Then, for each task of the order (corresponding to a pickup and delivery), we assign the earliest available agent and update its state, based on the product pickup and drop-off times and the ideal travel times. 
After assigning all the tasks of an order, we update the availability time of the workstation, and switch to the next order. 
To avoid congestion at the workstations, we can limit the number of agents assigned to one order.
From the complete schedule, we extract the assignment of the orders to the workstations and the sequence of tasks assigned to each robot.

\subsection{Interleaved Prioritized Planning Algorithm}\label{sec:IPP}
In the standard Prioritized Planning (PP) algorithm \citep{silver_cooperative_2005}, agents are given a certain priority order, then in descending priority order, we compute the shortest path for each agent, avoiding the trajectories of the previously planned agents (considered as moving obstacles). While PP is well-suited to our context because the shortest path computation is done for each robot separately and can accommodate the kinematic constraints, it is not adapted to handle the interdependence of  tasks. For example, consider a simple scenario where we have 3 robots, 1 workstation and a sequence of 10 orders, each containing 3 products. To process orders as fast as possible, it is natural to divide the tasks (products) of each order between the robots. With PP, after planning the 10 pickups and deliveries of the 1st and 2nd robot, the last planned robot may incur significant delays (due to the numerous moving obstacles). In this case, one cannot guarantee that objects from different orders are not mixed at the workstation. To avoid this issue, we propose an extension of PP, that we call Interleaved Prioritized Planning (IPP), where the priorities are dynamically assigned throughout the planning process, as follows.

Based on the task assignments, we reinitialize the agent and workstation states, and add to each agent's state its assigned sequence of tasks; and for each workstation, its assigned sequence of orders.
For each order (sorted by the FIFO rule), we consider the subset of agents assigned to (the tasks of) this order.
The earliest available agent $a$ gets the priority for path planning. Its path should visit the sequence of goal locations of its next task $t$ and, optionally, finish at its waiting location.
To compute its start time, we take into account the availability time of the relevant workstation.
The start time and list of goal locations are given to the VP$^*$ algorithm (Sec~\ref{sec:shortest_path}) which computes a trajectory that avoids collisions with previously planned trajectories.
Given the trajectory, we update the availability time of the agent and workstation, and remove $t$ from the tasks of agent $a$.
We update the previously planned trajectories with the complete trajectory information, including the last part of the path (that goes to the waiting location).
We repeat until all the tasks of the order are planned and then switch to the next order. 
The last part of each path is optional in the sense that if a robot can directly depart from the delivery location of its previous task to the pickup location of its next task then we discard the go-to-waiting-spot part of its path.
However, this part can be used and is key to ensure that our algorithm always returns a feasible solution.
A detailed pseudo-code for IPP is provided in Algorithm~\ref{algo:mapf}.

\begin{algorithm}
\caption{Interleaved Priority Planning based on VP$^*$ (Section~\ref{sec:shortest_path})}\label{algo:mapf}
\begin{algorithmic}[1]
\State\textbf{Data:} {Neighborhoods $A(\cdot)$}
\State\textbf{Data:} {Precomputed duration of edges $\tau(\cdot, \cdot, \cdot)$}
\State\textbf{Data:} {Path lower bound function $\bar{h}(\cdot, \cdot, \cdot)$}
\State\textbf{Data:} {Penalty function $p(\cdot)$}
\State\textbf{Data:} {Order list $O$ sorted by priority}
\State\textbf{Data:} {Task list $T(o)$ for each order $o \in O$}
\State \Comment{Sorted by planning priority}
\State\textbf{Data:} Assignment of orders to workstations $aw(\cdot)$
\State\textbf{Data:} Assignment of tasks to robots $a(\cdot)$
\State\textbf{Data:} Initial configuration of the robots $l(\cdot)$
\State\textbf{Data:} Waiting place of the robots $wl(\cdot)$
\State \textbf{Data:} $R(\cdot)$ Reservation table
\Comment{Can be empty}
\State $\tau^r(r) \gets 0 \quad \forall r$ \Comment{Next available time for each robot}
\State $paths(r) \gets \emptyset \quad \forall r$
\Comment{Path for each robot}
\For{\textbf{each} $r$ in a given order}
   \State $w\_path \gets$ path to the waiting place with VP*
   \State \Comment{May fail depending on $l$}
   \State Update $R$ with $w\!\_path$
   \State $waiting\_path(r) \gets w\!\_path$
\EndFor
\For{$order \in O$}
\State $ws \gets aw(o)$
\State $T \gets T(o)$
\State Sort $T$ by decreasing priority
\While{$T \neq \emptyset$}
  \State $task \gets T.pop(0)$ 
  \State $r \gets a(task)$
  \If{$task$ or $ws$ is not available at $\tau^r(r)$} 
     \State \Comment{Robot must wait}
     \State $paths(r) \gets paths(r) \cup waiting\_path(r)$ 
     \State Update $\tau^r(r)$
     \State Estimate the necessary waiting time $wt$
     \State Update $R$ and $paths(r)$ with staying at $wl(r)$ for $wt$ starting at $\tau^r(r)$ 

     \State $\tau^r(r) \gets \tau^r(r) + wt$
  \EndIf
  \State $VP \gets ViaPoints(l(r), task, wl(r))$
  \State $path, w\!\_path \gets$ VP$^*$($A, \tau, VP, \bar{h}, p, R, \tau^r(r)$)
  \If{path $\neq \emptyset$}
    \State Update $\tau^r(r)$ 
    \Comment{{\small Time at task's end location}}
    \State $paths(r) \gets paths(r) \cup path$
    \State $waiting\_path(r) \gets w\!\_path$
    \State Reserve $path \cup w\!\_path$ in~$R$
  \Else
    \State \textbf{return} $\emptyset$
  \EndIf
  \State Update $T$'s order
\EndWhile
\State Update $ws$'s available time 
\State \Comment{End time of the last action of $o$ on $ws$}
\EndFor
\State \textbf{return} paths
\end{algorithmic}
\end{algorithm}

We now state the completeness of the proposed algorithm.

\begin{theorem}\label{th:complete}
Assume that each robot has a designated waiting place,
where it can be idle without interfering with other robot trajectories. If the robots are at their waiting place at the beginning of the planning, the Interleaved Prioritized Planning algorithm is complete.
\end{theorem}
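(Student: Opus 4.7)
The plan is to prove completeness by induction on the order in which IPP processes tasks, maintaining the invariant that (i) after each successful VP$^*$ call the newly planned robot ends at its designated waiting place $wl(r)$, and (ii) every robot already scheduled is parked at its waiting place and its reservation in $R$ can be extended arbitrarily far into the future without creating a conflict. Under the assumption of the theorem, the initial loop that reserves the ``stay at $wl(r)$'' path for every robot trivially establishes the invariant, since each robot is already at $wl(r)$ and waiting places are mutually non-interfering by hypothesis.

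For the inductive step I would fix the task currently being planned for a robot $r$ with start time $\tau^r(r)$ and exhibit an explicit feasible trajectory that VP$^*$ can return; completeness of IPP then follows because VP$^*$ only returns $\emptyset$ when no feasible trajectory exists. Let $T^{\star}$ be the maximum end time, over all trajectories already in $R$, of the sub-parts that are not pure idling at a waiting place. Define the witness trajectory as: stay at $wl(r)$ throughout $[\tau^r(r), T^{\star}]$, then traverse a single-robot kinematics-feasible path that visits the via-points of the task and returns to $wl(r)$. This witness avoids all reserved trajectories because (a) during $[\tau^r(r), T^{\star}]$ the only other active motions are the ones in $R$, which the non-interference hypothesis guarantees do not conflict with occupancy of $wl(r)$; and (b) after $T^{\star}$ every other robot is, by the induction hypothesis, stationary at its own non-interfering waiting place, so the via-point-visiting subpath reduces to a single-agent planning problem on the routing graph of Section~\ref{sec:routing_graph}, which admits a solution whenever the routing graph connects $wl(r)$ to the task locations (an implicit connectivity assumption of the warehouse setup).

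The main obstacle will be bridging the existence of this witness with what VP$^*$ actually returns. Concretely, one has to argue that VP$^*$'s search space is rich enough to contain (or dominate) the witness: its state representation must allow unbounded waiting at the current location, the heuristic $\bar{h}$ must be admissible so that the search does not prune the witness prematurely, and the search must terminate. Provided VP$^*$ is sound and complete as a single-robot planner against moving obstacles, it will output some feasible path whenever one exists, in particular the witness, so the VP$^*$ call in Algorithm~\ref{algo:mapf} never returns $\emptyset$.

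Finally, I would close the induction by showing that the newly returned $w\!\_path$ can be extended by arbitrary idling at $wl(r)$ without introducing new conflicts in $R$; this is immediate from the non-interference assumption and restores both invariants, so the next VP$^*$ call again admits a witness of the same form. Since the number of orders and tasks is finite and every call succeeds, IPP terminates with a complete collision-free plan, establishing completeness.
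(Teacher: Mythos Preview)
Your overall strategy—exhibit a ``wait until every other reserved trajectory has finished, then move as a single agent'' witness and close the loop by returning to the waiting place—is exactly the mechanism the paper's proof uses. Two issues, however, keep your proposal from being a complete argument.

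\textbf{You never treat the workstation-ordering constraint.} Completeness here means IPP returns a \emph{feasible} plan, and feasibility has two parts: (i) collision-freeness and (ii) all tasks of an order finish at its workstation before any task of a later order starts there. Your induction only establishes that VP$^*$ never returns $\emptyset$, i.e., collision-freeness; you conclude with ``a complete collision-free plan'' and stop. The paper, by contrast, explicitly argues that the start time~$\tau$ computed in lines~28--35 (the waiting step) guarantees the robot cannot arrive at the workstation before $\tau^w$, and that updating the workstation's availability after the last task of an order propagates the constraint to subsequent orders. You need some version of this argument; otherwise a reader cannot conclude that the returned plan respects the order-level dependency that motivates IPP in the first place.

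\textbf{Your invariant~(i) and witness are mis-stated for tasks after the first.} After a VP$^*$ call, Algorithm~\ref{algo:mapf} sets $\tau^r(r)$ to the time at the \emph{task's end location} and updates $l(r)$ accordingly; only the separate $w\!\_path$ to $wl(r)$ is reserved in~$R$. So the robot is not, in general, at $wl(r)$ at time $\tau^r(r)$, and your witness ``stay at $wl(r)$ throughout $[\tau^r(r),T^\star]$'' does not type-check when the next VP$^*$ call starts from $l(r)\neq wl(r)$. The fix is the one the paper uses: the feasible witness first \emph{follows the already-reserved $w\!\_path$} from $l(r)$ to $wl(r)$ (this segment is collision-free by construction since it is in~$R$), then idles at $wl(r)$ past $T^\star$, then moves single-agent. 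Your invariant should therefore read ``a collision-free path to $wl(r)$ is reserved in~$R$'' rather than ``the robot ends at $wl(r)$''.
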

\begin{proof}
To show that the algorithm is complete, we need to prove that it always returns a feasible solution.
There are two constraints to consider: (i) the paths of two robots must not collide and (ii) all the tasks of an order must be finished on its workstation before any new task can start on that workstation.
The robots are starting at their waiting place. 
The planned trajectories contain, for each robot, a reservation of its waiting place for the duration of the planning horizon and no other information.
Each robot can hence wait for an unlimited time at its starting position without blocking the way of the other robots.
Consider the first task $o_{1, 1}$ of the first order $o_1$ assigned to robot~$r$, that is released at time $\tau^o_1$.
Its assigned workstation~$w$ is either available for order $o_1$ or will be available at a given time $\tau^w$.
We can compute, given $r$'s availability time~$\tau^r$, the earliest instant $\tau$ at which it can start to perform the task to respect the availability constraints, assuming the agent does not wait once it has left its waiting place, using a classical shortest path algorithm or via-point search.
Algorithm~\ref{algo:mapf} plans the robot to wait until~$\tau$ and then computes a path starting at~$\tau$ for the via-points for the task plus a return for the robot to its waiting place.
As there is no other robot for which we planned a move, a path that respects constraint~(i) always exists, assuming the graph is connected. By definition of~$\tau$, it also respects constraint~(ii).
We update the available time~$\tau^r$ of the robot with the task's end time, so that it can restart from its last position.
If the order has other tasks, we use the same availability time~$\tau^w$ for the workstation when determining when to start from the waiting place. The time~$\tau$ computed is now an optimistic estimate based on travel times without collision from the robot's current location. It ensures~(ii) as the robot can only arrive later than $\tau^w$ at the workstation if it needs to avoid other robots.
If the robot needs to wait, we send it to wait at its waiting place: if it is different from its current position, we reserved a path to go there during the planning of the robot's previous task.
Constraint~(i) is then ensured by the existence of a feasible path: the robot can always wait for all the planned trajectories to be finished
to move. 
When all tasks of an order have been planned, the availability time of the workstation is updated with the end time of the last action of the order, ensuring that no task of another order is able to start before that time.
Tasks of the next order can hence be planned following the same logic and respecting the constraints.
\end{proof}
Note that Theorem~\ref{th:complete} also holds when the robots start at locations other than the waiting places and either a path is reserved in the planned trajectories from their current position to their waiting places, or such a path can be computed for each agent in a priority order given to the algorithm for initializing the planned trajectories.

\subsection{Routing Multi-Graph}\label{sec:routing_graph}
In many other works, routing is performed directly in the warehouse graph~$G_w$ as defined in Section~\ref{sec:lifelong_MAPF}.
However, as a robot may occupy several nodes in our context, cannot turn everywhere and has more complex dynamics, we design a specific routing graph that can integrate those elements in its structure. 
To do so, we convert the warehouse graph $G_w$, to a directed routing multi-graph in which each vertex (say node C in the Figure~\ref{fig:nodes}) of $G_w$ is represented with several vertices of different types and their associated arcs: 
(i) a start and a stop node, to start planning from that position or end an itinerary at that position (only shown for node A below);
(ii) for each adjacent node in the warehouse graph (like node D), we create two nodes: one for arriving from that node and one for going to that node. 
They are linked by an additional arc allowing for going backward without changing direction;
(iii) where it is possible to turn, we add turning arcs for the robot to reverse its orientation (i.e., a U-turn);
(iv) at the nodes (like C) where two nonparallel edges meet, we add two turning arcs per node-triplet (two for B-C-D, and two for D-C-B), 
one for turning clockwise and one for turning counter-clockwise. One of these arcs also turns the traveling orientation of the robot. 

\begin{figure}[ht]
\begin{center}
\includegraphics[keepaspectratio, width=0.9\columnwidth]{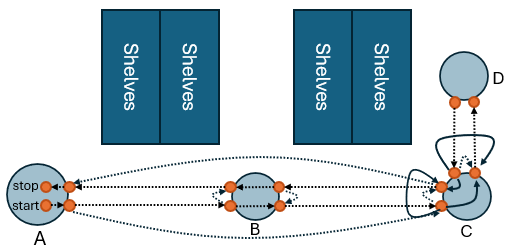}
\end{center}
\caption{Example of graph conversion. Original warehouse nodes are in blue, new routing graph nodes are in orange, dashed lines are straight arcs, while solid lines are turning arcs. The solid turning lines within node C represent two arcs each: for turning clockwise and counter-clockwise.}
\label{fig:nodes} 
\end{figure}

The robot needs to stop in order to turn, to avoid sliding, tilting and dropping its potentially heavy load.
However, if the robot is going straight, we want to traverse long paths without stopping, possibly passing through several nodes. 
To this effect we augment the routing graph with additional shortcut edges between nodes that are on a straight line.
With shortcuts in place, we can set the initial and final linear and angular velocities at all the nodes to zero. We use the robot's dynamics to associate two travel times with each arc: one for an empty robot and one for a loaded one. The times for each arc traversal can now be precomputed based on accelerating the robot to maximum linear/angular velocity within the edge-specific speed limits and decelerating back to speed zero.
As the speed at each node is zero, and arc travel times are fixed, computing the shortest path without moving obstacles for a single robot is polynomial (as opposed to the general case of shortest path with kinematics~\cite{ardizzoni_solution_2023}). 
We use this property to rapidly compute the lower bounds on the duration of paths that avoid collisions.

\subsection{Collision Checking} 
\label{sec:collision-checking}
In classical settings, strong hypotheses on the routing graph are taken to avoid robot collision.
Robots in adjacent nodes cannot collide, and many rules on robot movement ensure that collisions are avoided~\cite{stern_multi-agent_2019}.
In our setting, adjacent nodes can be so close that two robots occupying them would collide, or so far away that two robots traveling the same edge would not collide.
We hence define a more spatially explicit collision checking method, adapted to this setting and reminiscent to swept AABB~\cite{gottschalk_collision_1998}.

The position $xy \in \mathbb{R}^2$ of each robot and its orientation (yaw) $\theta \in [-\pi, \pi]$ is uniquely determined at each time~$\tau$.
In the routing algorithm, the  \textit{configuration} of the robot is defined as a tuple $(\tau, xy, \theta, s, s_{\theta}, l)$, 
where $s\in \mathbb{R}$ is the linear velocity in direction $\theta$, $s_{\theta} \in \mathbb{R}$ is the angular velocity, and $l=1$ if agent is loaded and $l=0$ otherwise.
For collision detection, the space occupied by the robot is modeled as a connected two-dimensional set (such as a polygon)  centered at its $xy$-position and rotated by its orientation. 
Such a set may include additional padding. We check if the spaces occupied by two different robots intersect at any given time~$\tau$. 
For a practical implementation, one may, for example, cut continuous time into intervals, and model the space occupied by the robot during an interval as (a convenient superset of) the union of the sets the robot occupies during this interval. Checking if a robot would collide with other robots at time $\tau$ can then be conservatively approximated by checking if the space occupied by the robot intersects with the spaces of the other robots occupied during the interval containing $\tau$. One may also add some time safety margin by reserving some neighboring intervals.

\subsection{VP$^*$ Algorithm}\label{sec:shortest_path}

To plan the displacements of the robot for a given task, classical shortest path algorithms are inappropriate as we need to compute paths that take into account not only the position of static obstacles (modeled into the graph with non-existing edges) but the position of the other robots, which are moving in the warehouse at the same time as we plan the robot's path.

\paragraph{Via points.}
For each task, we must plan several consecutive displacements of the robot (e.g., going to an aisle to pick up an object, going to the workstation to deliver the object, moving to a waiting area to wait for the next task).
We call \textit{via points} the locations that we have to visit sequentially.
If we were to use classical shortest path with collision avoidance sequentially for each pair of consecutive via points, we would have no guarantee of finding a path for all the displacements of the sequence (see Section~\ref{sec:ablation}).
Indeed, taking the shortest path for the first pair of the sequence may prevent the robot finding any feasible path to the remaining via points.

\citet{li_lifelong_2021-1} propose a complete heuristic algorithm for a simpler version of the shortest path with via points problem.
This heuristic however does not account for robot dynamics, actual robot load, position and orientation in space, and its collision checking is based on vertex and swapping conflicts only.
We explain below how to handle these additional constraints and propose an improvement that guides the search faster toward a first feasible solution.

\paragraph{Robot dynamics in VP$^*$.}
Even if the arc weights are precomputed, based on the robot's dynamics, we need precise displacement information when checking collision for the current robot against already planned robot paths (our moving obstacles), which impacts the speed of the algorithm.
From a practical point of view, the space occupied by the robots along their planned paths are computed and stored in a \textit{reservation table} for later collision checking when other robots compute their paths.

\paragraph{Robot configuration.}
In our context, the orientation of the robot is important: in order to pick up or drop off objects at the shelves and workstations, the robot must be in an specific orientation to execute the (pickup or drop-off) action.
However, due to space constraints, it may not be possible for the robot to rotate in the aisle, so it must be in the right orientation when entering the aisle.
Orientation is hence taken into account in the configuration of the robot, and also in the description of the via points and in the routing multi-graph.
In addition, the speed of the robot is different when loaded, so the load is also added to the via-point information.
Each via point is defined by a node in the graph (and physical position), a partial \textit{in-configuration} indicating the required orientation (yaw) and the load when arriving at the node, and a partial \textit{out-configuration} indicating the required orientation and the load when leaving the node, and the time required at the via point to perform the associated action.
The via points are the only points where the robot can perform an action.

\paragraph{Shortest path with via points.}
The objective is to find an earliest arrival path starting from the first via point and then passing by each via point in the order of the sequence, and arriving at the last via point while respecting the constraints imposed by the partial configurations of each via point, stopping at each via point for the required time, and avoiding collision with the moving obstacles whose configuration is known at each time instant from the reservation table.
Although, in our warehouse, the only moving obstacles are the other robots, the collision checking could be performed similarly for any kind of moving obstacle whose occupied spaces have been stored in the reservation table.
In our context, travel time must be either continuous or discrete with high precision to avoid cumulative errors over multiple paths computed sequentially during the planning.
However, to reduce the computation time, when the robot is waiting at a given point, the waiting time is discretized to the second in our experiments.
When waiting at a node is possible, the graph contains an additional arc from the node to itself with a duration equal to the chosen minimum waiting time.

\paragraph{Notation} 

We denote by $G = (V, A)$ the routing graph (as explained in Section~\ref{sec:routing_graph}) where $V$ is the set of nodes and $A$ the set of arcs between those nodes.
For a given node~$v$, $A(v)$ denotes the output arcs of~$v$, i.e., the set of arcs $\{(v, v') \in A | v' \in V\}$.
The weight $\tau(v, v', l)$ of an arc $(v, v')$ is the duration to traverse the arc~$(v, v')$ with $l=1$ if the robot is loaded and 0 otherwise.
The angle $\theta(v, v')$ is the rotation angle of the edge, when the edge allows to turn in a given direction, and 0 otherwise.
A path is a succession $(v_0, v_1), (v_1, v_2), \dots (v_{i-1}, v_i)$ of arcs of~$A$.
The duration of a path~$p$ is $\sum_{(v, v') \in p} \tau(v, v', l)$.
To obtain a lower bound  $h(v, v', l)$ for the duration of a path between node $v$ and node $v'$, we precompute the duration of the shortest path under the assumption that there are no moving obstacles (one duration for a loaded robot and one for an unloaded robot), and without any requirement on orientation at node~$v'$, using any classical shortest path algorithm, like the one used by~\citet{li_lifelong_2021-1}.
As stated before, this shortest path can be obtained in polynomial time based on our routing graph.
We denote via points with nodes $v$ with additional attributes so that $v.l$ is the out-configuration load of via point $v$ and $v.d$ is the duration to perform an action at via point $v$.
When passing by~$k+1$ sequential via points $\bar{v} = (v_0, v_1, \dots, v_k)$, we define the estimate of the duration of the path to reach the last via point as
as the sum of the estimates of the subpaths, i.e., $\bar{h}(\bar{v}) = \sum_{ 0 \leq i < k} \left(h(v_i,v_{i+1},v_i.l) + v_{i}.d\right)$ (it does not include the time of the action at the last via point).
With some modifications to check the required orientation at nodes, configurations at start point and via points can be taken into account to obtain a better estimate.
For a partial solution where via points $(v_0, v_1, \dots, v_{vp})$, $vp < k$, have already been passed sequentially, we can then estimate the path duration from its last node~$v$ in configuration~$c$ to via point $v_k$ similarly.
This duration is denoted~$\bar{h}(v, c, vp, \bar{v})$ or $\bar{h}(v, c, vp)$ for short.

\paragraph{VP$^*$ algorithm.}
The VP{*} algorithm plans the path from node $v_0$ to $v_k$ sequentially visiting via points $\bar{v}$ and avoiding collisions with previously planned paths saved in the reservation table. 
The algorithm iterates on a priority queue, which directs the search toward the most promising elements.
This queue contains tuples~$(v, c, vp, hs)$, where $v$ is a node, $c$ a configuration at this node, $vp$ the index of the last via point reached and $hs$ the {\it heap score} of the tuple. 

At the beginning of the search, we initialize this queue with a tuple $(v_0, c_0, vp_0, hs_0)$ containing the origin node $v_0$ (the first via point), the initial configuration~$c_0$ (obtained from the partial out-configuration of the first via point and the path start time), $vp_0 = 0$ and the heap score~$hs_0$ of the element.

During a search step, the algorithm pops a heap element $(v, c, vp, hs)$ with the lowest heap score and explores its neighborhood~$A(v)$ in the routing graph. 
We first check the lower bound $\bar{h}(v, c, vp)$ on the duration of a path to the destination $v_k$ from $(v, c, vp)$.
If it is larger than the duration of the best path found so far, the search is pruned by going to the next step.
If not, we check if we have reached the next via point in the right orientation, i.e. if node~$v$ is the next via point and if configuration~$c$ has the same orientation as the in-configuration of the next via point.

If we are at the next via point, we need to be able to stay at node~$v$ without collision for the duration of the action to perform at this via point.
If we can, and we are at destination $v_k$, we can update the earliest known arrival time at node~$v_k$ and go to the next step. 
If we are not yet at final destination $v_k$, we increase by one the index~$vp$ of the last reached via point and update the configuration time~$\tau$ with the time spent at the via point before exploring the neighborhood of~$v$, as well as the load from the out-configuration of the via point.

For each neighbor~$v'$ of node~$v$, we verify if we can use arc $(v, v')$ starting at time~$\tau$ without collision, using the dynamics of the robot to compute its position in time and space while traveling the edge.
If there is no collision, we check that the obtained configuration $c'$ at~$v'$ has not already been added to the queue for via-point index~$vp$.
If not, we compute the heap score $hs'$ and add the new element $(v', c', vp, hs')$ to the queue.

The algorithm finishes when the queue is empty.
Note that additional stopping criteria could be added, such as having found a first feasible path, having popped a maximum number of elements out of the queue, or having spent a certain amount of time.
The first drops the optimality of the algorithm, the last two leads to drop the completeness of the algorithm as it can return before finding a first feasible solution when one exists.
Algorithm~\ref{algo:exact} gives the pseudo-code of the algorithm.

\begin{algorithm} 
\caption{Shortest path with via points and time dependent obstacle collision avoidance VP$^*$}
\label{algo:exact}
\begin{algorithmic}[1]
\State\textbf{Data:} {Neighbourhoods $A(\cdot)$}
\State\textbf{Data:} {Precomputed duration of the edges $\tau(\cdot, \cdot, \cdot)$}
\State\textbf{Data:} {Via points $\!VP\!=\!(v_i,\! c^{in}_i, c^{out}_i,\! d_i)_{0 \leq i < k}$}
\State\textbf{Data:} {Path lower bound function $\bar{h}(\cdot, \cdot, \cdot)$}
\State\textbf{Data:} {Penalty function $p(\cdot)$}
\State\textbf{Data:} Reservation table $R$
\State\textbf{Data:} {Start time $\tau_0$}
\State $Prev\_c \gets Dict()$  \Comment{Previous configs in the path}
\State $min\_arr \gets \infty$   \Comment{Min path duration found so far}
\State $min\_c \gets \emptyset $ \Comment{Arrival config of best path so far}
\State $c_0 \gets (\tau_0, xy(v_0), c^{out}_0.\theta, 0, 0, c^{out}_0.l)$
\State $hs \gets \bar{h}(v_0, c_0, 0) + p(0)$
\State $Q \gets \{ (v_0, c_0, 0, hs) \}$

\While{$Q \neq \emptyset$}
  \State $(v, c, vp, hs) \gets Q.pop()$  \Comment{Min score element}
  \If{$c.\tau + \bar{h}(v, c, vp) > min\_arr$}
    \State \Comment{Cannot improve on current best solution}
    \State continue 
  \EndIf  
  
  \If{at\_destination}
    \If {robot can stay at $v_{k-1}$ for $d_{k-1}$} \State \Comment{Checking $R$}
    	  \State Update $min\_arr$, $min\_c$
    	  \State \textbf{continue}
    	 \EndIf
  \Else 
  \If{ at\_next\_via\_point}
    \If {robot can stay at $v_{vp}$ for $d_{vp}$} \State\Comment{Checking $R$}
      \State $c.\tau \gets c.\tau + d_{vp}$ \Comment{Staying at $v_{vp}$}
      \State Update $c$ with $c^{out}_{vp}$ 
      \State $vp \gets vp + 1$
    \EndIf 
  \EndIf
  \EndIf

  \For{$(v, w) \in A(v)$}
    \If{taking $(v, v')$ at $c.\tau$ in configuration $c$ does not collide with any planned path of $R$}
      \State {\small $c' \gets$ Update $c$ with $\tau(v, v', l_{vp})$ and  $\theta(v, v')$ }
      \If{$(v', c', vp, hs)$ not added to $Q$ before}
        \State $hs \gets c'.\tau + \bar{h}(v', c', vp) + p(vp)$
        \State Add $\{ (v', c, vp, hs) \}$ to $Q$
        \State Update $Prev\_c$
      \EndIf
    \EndIf
  \EndFor  
\EndWhile

\State \textbf{return} ReconstructPath($Prev\_c$, $min\_c$)

\end{algorithmic}
\end{algorithm}

Note that in our experiments, we stop the search as soon as a first solution is found.
In practice, it is much faster (as the algorithm does not need to empty the queue completely before returning a solution) and we observed on our setting good results at the level of the MAPF algorithm, sometimes even better than with optimal paths, suggesting that optimizing locally the path of a given robot might in some cases make it more difficult for other agents to plan their own paths.

\paragraph{Heap score.}
The heap score of $(v, c, vp)$, where~$v$ is a node, $c$ a configuration and $vp$ a via point index, is computed as the sum of an optimistic estimate~$\bar{h}(v, c, vp)$ of the shortest path duration to the destination after passing through all via points and a penalty~$p(vp)$ that is higher when the number of remaining via points to pass is higher, and equal to 0 when only one via point remains to be reached.
This penalty aims to favor finding a first feasible path between origin and destination passing through all via points by making a depth-first like search.
It is hence different from the heap score used by ~\citet{li_lifelong_2021-1} and of the classical A$^*$ algorithm as the score is not necessarily a lower bound on the duration of a path to destination from the current heap element. 
However, as we use the travel-time lower bounds to prune the search and not the heap scores, the algorithm remains optimal if run until the queue is empty.
Section~\ref{sec:ablation} compares our heap score to the one with $p(vp) = 0$, and shows that the proposed penalty is indeed an efficient way to reduce the search space.

\paragraph{Further improvements of VP*}

As the robots must regularly wait to avoid collision in our setting, the VP* algorithm, implemented as described above, might lead to solutions that are optimal in terms of path duration, but not satisfactory for a human observer.
Indeed, spending the waiting time immobile or moving is equivalent according to the path duration criterion, and we observe in our experiments solutions with unnecessary moves happening during those waiting times.
Because we discretize waiting times (by setting the duration of the loop arcs of the graph), it can also happen that those moves improve the minimum path duration by allowing to return at a node slightly before what would have been possible staying immobile.
To avoid this problem, we introduce a secondary criterion, the minimum total time in movement of the robot, that we use to break ties between equivalent solutions, and we discretize arrival times in such way that solutions that are very close in terms of arrival times are deemed equal according to this criterion.
Then, among equivalent solutions for this first criterion, only the solution with the lowest total moving time is kept.

More details on how to implement this version of the algorithm and associated experiments can be found in Appendix.
Due to the discretization and to the ties breaking properties of the secondary criterion, we obtain solutions of equivalent quality with lower execution times. 

\section{Validation}\label{sec:experiments}

\subsection{In Simulation}
We implement a simulator of the warehouse environment where robot movements are modeled with simple dynamics. 
The robot's linear acceleration $A$ is constrained to the interval $[A_{dec}, A_{acc}]$. 
The value of $A$ may depend on the loading state of the robot since a heavy robot cannot accelerate as fast as a lighter one. 
If we suppose that the robot is always accelerating or decelerating as fast as it can toward the desired speed, we can compute the shortest time $t_s$ it takes to travel a line segment of length $d$ when starting with velocity $v_i$ and obeying acceleration constraints, segment-specific speed limit $V$ and the maximum speed at the end of the segment~$V_f$. Identical computations can be made for pure turning arcs in which only the robot's orientation~$\theta$ changes.
In this case the same computation are performed with acceleration limits $[A_{\theta_{dec}}, A_{\theta_{acc}}]$, angular distance $d_{\theta}$, and maximum angular speed $V_{\theta}$.
For all the arcs in the routing graph, the initial and maximum final velocities equal zero.
Under those hypotheses, we get highly accurate travel times, compared to the actual measurements in the warehouse (see Section~\ref{sec:real}).

In the simulation experiments, the maximum linear/angular velocities are 0.2[m/s; rad/s], and the constant (ac/de)celerations is 0.25[m/s$^2$; rad/s$^2$] when the robot is loaded, 0.5 otherwise.

We consider three generic warehouse layouts, with one to three rows of shelves, as illustrated in Fig.~\ref{fig:layout}. 
For each layout graph, we generate 100 instances with 5 orders each.
For each order, the number of tasks is sampled from a max-inflated truncated geometric distribution with mean=2.5 and max=4.
The order being a delivery or a pickup is chosen uniformly between the two options and the location of each task is sampled uniformly from all the possible positions on the graph (in our layouts, at the nodes of the shelves).

Experiments use an Intel(R) Xeon(R) CPU E5-2680 v4 2.40GHz x86\_64 server with 56 cores, 2 threads per core, and 1.0TB of RAM.

\subsubsection{The Need for Dynamics}
\label{sec:need-for-dynamics}
To study the need for taking dynamics into account, we compare our proposed method to two different baselines. The first baseline, admittedly naive given our graph with edges of different length, assumes that we can move to any neighboring node in constant time. Following the paths planned based under this hypothesis causes collisions in our standard warehouses after following two edges. 

The second baseline, like suggested by~\citet{ma_lifelong_2018} ignores inertial dynamics and allows instantaneous switching between stopping and constant-speed movement. This assumption, does not actually simplify or speed-up our algorithm (since the spatially explicit collision checking is still needed). We simulated 100 different scenarios and the paths planned with this assumption using 4 robots. All of them resulted in a collision within 5 minutes, half of them failed within 1 minute, some within the first 10 seconds.
Due to the increased need for replanning and no significant speed-up benefit, it is clear that dynamics should be considered in the planning process.

\subsubsection{Ablation Study}\label{sec:ablation}

In this study, we use a simple layout graph similar to the left layout in Fig.~\ref{fig:layout}, with 100 nodes and 106 arcs in the warehouse graph and 596 nodes and 1908 arcs in the routing multi-graph. We call it \textit{Layout 0}.

\paragraph{Sequential planning vs via points.}
To evaluate the need for the proposed via-point algorithm VP$^*$ in our IPP MAPF solver, we replace it by a more naive approach that sequentially plans the shortest paths between the different via-points pairs of a task, while avoiding collisions.
For this sequential approach, when computing a path between two via points, the heap score contains only an optimistic estimate of the shortest path duration to the second via point, as in the A* algorithm.
We tested the resulting IPP with sequential planning on Layout 0, using 100 scenarios and 2 to 4 robots. It fails to find a feasible solution for 3 instances out of 100 with 2 robots, for 6/100 instances with 3 robots and for 11/100 instances with 4 robots.
The increase in the number of failures with the number of robots is expected as more robots in the tight environment means more constraints on collision avoidance. This confirms the necessity of our via-point planning strategy.

\paragraph{Without reserving a path to the waiting place.}

Another important factor ensuring completeness is the reservation of paths to the waiting places of the robots in the reservation table.
If those paths are no longer reserved, but only computed when a robot needs to go and wait there, with Layout 0, Algorithm~\ref{algo:mapf} fails to find a feasible solution in 18 instances out of 100 for 2 robots, for 48 instances for 3 robots and for 62 instances for 4 robots.
As for sequential planning, the increase of the number of failures with the number of robots is expected.
Although the robots do not always need to use the waiting paths, reserving them ensures that the robots can actually leave their last location, and possibly use a part of the reserved path to reach their next destination.
Without planning and reserving paths to the next via point(s), a robot can easily find itself blocked or overrun by other robots who can plan their routes taking advantage of the first robot not making any reservations for its future.

\paragraph{Impact of the penalty function in VP$^*$.}

As explained in Section~\ref{sec:shortest_path}, we change the standard heap score of states in our VP$^*$ algorithm by adding to the path's duration estimate a penalty based on the number of via points that have not been visited yet.
In our implementation, this penalty is simply the number of remaining via points minus 1 times 1000 seconds.

Table~\ref{tab:penalty} shows that this addition is fulfilling its objective: the number of states visited is significantly reduced and the algorithm run time decreases accordingly, while the impact on the plan quality is very low.
The run time of the plans is improved by a median factor of 3.2 (minimum is 0.73, maximum is 25.39 on our set of instances).

\begin{table}
    \setlength{\tabcolsep}{4pt}
    \centering
\resizebox{0.485\textwidth}{!}{%
  \begin{tabular}{r|r||r|r||r|r||r|r}
    \multicolumn{2}{c||}{{}}& \multicolumn{2}{|c||}{{2 robots}} & \multicolumn{2}{|c||}{{3 robots}} & \multicolumn{2}{|c}{{4 robots}}\\
    \hline
    \multicolumn{2}{c||}{{Heap score}}& std & penalty & std & penalty & std & penalty\\
    \hline
    \multirow{3}{0.9cm}{Visited states ($\times 10^3$)} 
& Q1 & 13 & 5 & 58 & 21 & 137 & 46\\
& Q2 & 30 & 10 & 109 & 75 & 303 & 74 \\
& Q3 & 76 & 22 & 184 & 122 & 485 & 130 \\ \hline
    \multirow{3}{0.9cm}{{Run time (s)}} & Q1 & 24 & 9 & 110 & 38 & 276 & 90  \\ 
& Q2 &  55 & 17 & 212 & 75 & 611 & 137 \\ 
& Q3 &  131 & 39 & 355 & 122 & 1009 & 289 \\ \hline
    \multirow{3}{0.9cm}{Make-span (min)} & Q1 & 18 & 18 & 15 & 15 & 13 & 13   \\ 
& Q2 & 22 & 22 & 17 & 17 & 15 & 15 \\ 
& Q3 &  25 & 25 & 19 & 19 & 15 & 15 \\ 
  \end{tabular}}
    \caption{Quartiles of number of states visited, run-time and makespan when using standard vs penalty augmented heap score}
    \label{tab:penalty}
\end{table}

\subsection{Impact of the Layout}

In our constrained environment where robots cannot turn to reverse their orientation or pass another robot, the layout may impact the efficiency of the planning.
To measure how it influences the different metrics, we run experiments on 3 different layouts of approximately the same size (220 to 232 nodes and 226 to 248 arcs in the warehouse graph), with the same number of pickup and delivery locations but different number of corridors that are represented on Fig.~\ref{fig:layout}.
Layout 1, on the left, is the more constrained environment, where long aisles do not offer any possibility for two robots to pass while Layout 3 (on the right side) has two intermediate corridors that allows for the robots to avoid one another more easily when moving in opposite directions.
The total number of shelf nodes in the warehouse graph is 180 for the 3 graphs.
For this experiment, we use 50 scenarios where the positions of the pickups and drop-offs at the shelves are identical (if corridors splitting the shelves are not taken into account) on the 3 layouts.
Table~\ref{table:layout} shows that the layout does not impact much the makespan.
To study the impact on individual orders, rather than maximum completion time, we add a regret metric measuring the difference between the actual processing time of the tasks and the estimated processing time computed without collision. 
It measures how much the robots need to diverge from an ideal path in order to avoid one another.
The layouts that allow for easier collision avoidance have reduced regret for all number of robots, Layout 3 improving on Layout 2 and Layout 2 improving on Layout 1.
As for the run time, it is much more impacted by the number of robots than by the layout, which is expected as the graphs have very similar sizes.

\begin{table}
  \setlength{\tabcolsep}{3pt}
  \centering
\resizebox{0.485\textwidth}{!}{%
  \begin{tabular}{r|r||r|r|r||r|r|r||r|r|r}
 
    \multicolumn{2}{c||}{}& \multicolumn{3}{|c||}{{2 robots}} & \multicolumn{3}{|c||}{{3 robots}} & \multicolumn{3}{|c}{{4 robots}}\\
    \hline
    \multicolumn{2}{c||}{Layout}& {1} & {2} & {3} & {1} & {2} & {3} & {1} & {2} & {3}\\
    \hline
    \multirow{3}{0.9cm}{Make-span (min)} 
& Q1 & 23 & 22 & 23 & 19 & 17 & 19 & 16 & 16 & 16\\
& Q2 & 28 & 29 & 30 & 21 & 21 & 22 & 19 & 19 & 19\\
& Q3 & 32 & 32 & 33 & 25 & 24 & 25 & 23 & 23 & 24\\ \hline
    \multirow{3}{0.9cm}{{Regret (\%)}} & Q1 & 4.2 & 3.1 & 2.9 & 7.6 & 6.4 & 6.1 & 10.7 & 8.2 & 7.4\\
    & Q2 & 5.7 & 5.0 & 4.0 & 9.9 & 7.7 & 7.6 & 12.5 & 10.1 &  9.3\\
    & Q3 & 8.1 & 6.3 & 5.5 & 13.8 & 9.9 & 9.1 & 15.4 & 12.1 & 10.8\\ \hline
    \multirow{3}{0.9cm}{{Run time (s)}} & Q1 & 51 & 54 & 25 & 154 & 129 & 165 & 596 & 534 & 524\\ 
& Q2 & 101 & 95 & 76 & 326 & 343 & 403 & 1163& 1051 & 1264 \\ 
& Q3 & 249 & 250 & 257 & 815 & 742 & 799 & 1679 & 1876 & 2226 \\ 
  \end{tabular}}
  \caption{Quartiles of the makespan (in minutes), regret (in percent) and run time (in seconds) of the IPP algorithm for Layouts 1, 2 and~3.}\label{table:layout}
\end{table}

\vspace{-0.5cm}
\subsection{Dealing with Uncertainty}
With a large robot, actuation noise is likely to make the movement differ from the ideal trajectory.
This may lead to collisions especially if the actual movement is often slower or faster than the ideal movement. To reduce the problem created by such noise, we study the efficiency of reserving neighboring time intervals around the actual time for each position of the robots in the reservation table.
When we reserve an area for the robot for a given instant~$\tau$, we also reserve it for~$\Delta$ seconds before and after~$\tau$.
We test the robustness of plans computed in Layout 0 with different values of~$\Delta$ in a stochastic environment where edge travel times are scaled by a factor that is PERT-distributed with the support of [1.0, 1.1] and with mode at 1.01.
For all numbers of robots, adding more time margin in planning increases the time to collision (see Table~\ref{table:margins}).
Unlike makespan that is largely unaffected by the different levels of margin, the planning time clearly increases with bigger margins. The supplementary material contains a video featuring 3s time margins for four robots.

\begin{table}
  \setlength{\tabcolsep}{4pt}
  \centering
\resizebox{0.485\textwidth}{!}{%
  \begin{tabular}{r|r||r|r|r||r|r|r||r|r|r}    
    \multicolumn{2}{c||}{{}}& \multicolumn{3}{|c||}{{2} robots} & \multicolumn{3}{|c||}{{3} robots} & \multicolumn{3}{|c}{{4} robots}\\
    \hline
    \multicolumn{2}{c||}{{$\Delta$ (s)}}& {0} & {2} & {4} & {0} & {2} & {4} & {0} & {2} & {4}\\
    \hline
    \multirow{3}{1.2cm}{{Time to failure $\;$ (min)}} 
& Q1 & 3 & 5 & 6 & 2 & 3 & 5 & 2 & 3 & 4 \\
& Q2 & 6 & 7 & 9 & 3 & 5 & 7 & 2 & 4 & 6 \\
& Q3 & 10 & 10 & 14 & 4 & 8 & 9 & 3 & 5 & 8 \\ \hline
    
    \multirow{3}{1.2cm}{{Make-span (min)}} & Q1 & 18 & 19 & 19 & 15 & 15 & 15 & 13 & 13 & 13 \\ 
& Q2 & 22 & 22 & 22 & 17 & 17 & 18 & 15 & 15 & 15 \\ 
& Q3 & 25 & 25 & 26 & 19 & 20 & 20 & 17 & 18 & 17 \\ \hline
    \multirow{3}{1.2cm}{{Run time (s)}} & Q1 &  9 & 13 & 18 & 38 & 67 & 106 & 90 & 173 & 314  \\ 
& Q2 & 17 & 26 & 38 & 75 & 140 & 228 & 137 & 366 & 592 \\ 
& Q3 & 39 & 56 & 85 & 122 & 226 & 367 & 289 & 791 & 1330  \\ 
  \end{tabular}}
  \caption{Quartiles of time to collision when applying the plans in a stochastic environment, makespan of the plans and run time of the IPP.}
  \label{table:margins}
\end{table}

\subsection{Scaling to larger instances}

In this section, we evaluate the performance of the algorithm on larger instances with 10 robots.
We consider the following setup (\textit{Layout 4}): a 3-rows layout (similar to Fig.~\ref{fig:layout} right) with 10 shelves instead of 5, 5 workstations instead of 2 and 10 waiting places. We solve 100 instances of 10 orders each for 10 robots. Having twice the number of shelves of Layout 3, the graph is about twice as big, the number of robots is more than doubled and the number of tickets is doubled as well in each instance compared to previous experiments.

The results are summarized in Table~\ref{table:10-robots}. Compared with the results for 4 robots, we observe that the median regret is only slightly increased (from 9.3 to 11.2 percent) and the median running time is less than doubled, which shows a reasonable sub-linear scalability of our approach for larger instances.

\begin{table}
  \setlength{\tabcolsep}{4pt}
  \centering
  \begin{tabular}{r|r|r|r}    
    & Makespan & Regret & Run time \\
    \hline
Q1 & 38 min & 9.9 \%  & 1266 s\\ 
Q2 & 45 min & 11.2 \% & 2377 s\\ 
Q3 & 50 min & 12.7 \% & 4939 s\\ 
  \end{tabular}
  \caption{Quartiles of makespan (in minutes), regret (in percent) and run time (in seconds) of the IPP algorithm for 10 robots in the Layout 4.}
  \label{table:10-robots}
\end{table}

\subsection{In Real Environment}\label{sec:real}
To validate the proposed algorithm in realistic use-cases, we performed two experiments in a real warehouse environment whose layout is similar to the left-most layout in Fig.~\ref{fig:layout}.
Its routing graph has 668 nodes and 2103 edges. Each experiment involved two robots. 

The first experiment was conducted with maximum linear speed and maximum angular speed of 0.2m/s and 0.2rad/s, and constant (ac/de)celerations of 0.25[m/s$^2$; rad/s$^2$] for loaded and unloaded robots. 
It involved 4 pickups and 4 drop-offs over 10 minutes. From the experiment, a notable discrepancy was observed between the plan and the real trajectories. Discrepancies were mainly influenced by the stochasticity in the duration that the robots spent picking up or dropping off an object. 

For the second experiment, the estimated pickup and drop-off times were increased by a constant. The robot motion limits were increased to 0.8m/s for linear speed and 0.8rad/s for angular speed, with constant (ac/de)celerations of 0.45[m/s$^2$; rad/s$^2$].
The total set of tasks involved 10 pickups and 10 drop-offs, which were performed continuously over 13 minutes. In the second experiment, the real robot trajectories showed significantly reduced discrepancy from the plans compared to the first experiment. 
The supplementary material contains videos of the two experiments, where we display the the real trajectory data (dark color) as well as the planned trajectory (lighter color) of each robot.

\section{Conclusion and Future Work}
In this paper, we study a lifelong MAPF problem with interdependent tasks, to model a realistic online multi-robot pickup and delivery service in a warehouse. 
We propose the Interleaved Prioritized Planning with VP$^*$ algorithm that takes into account precise robot dynamics to compute collision-free trajectories. 
We show the completeness of IPP-VP$^*$ under simple assumptions and empirically evaluate its feasibility.
In particular, we compare it to an adaptation of existing methods that do not account for the dynamics and show that they fail to provide feasible plans in our context. We conduct a precise ablation study to demonstrate the necessity of the different components of our approach. Finally, we successfully apply IPP-VP$^*$ to a simple scenario in a real warehouse. We believe that our approach makes an important step toward bridging the gap between the extensive literature on MAPF with simplifying assumptions and more complex real-life scenarios. 

The real warehouse experiments reveal that despite the high fidelity of our robots and the controlled environment of an automated warehouse, deviation from the plan is unavoidable, and the introduced time margins may not be sufficient to deal with unexpected delays. 
Hence, a future work includes the ability to replan regularly based on the latest information, which would require speeding-up our planning algorithm. 
Finally, it would be interesting to consider the joint optimization of the task assignment and trajectory planning, extending ideas from previous literature~\citep{okumura_solving_2023, ali_improved_2024} to our more challenging setting.

\clearpage
{
     \bibliographystyle{ieeenat_fullname}
     \bibliography{references}
 }

\clearpage
\appendix
\section*{Appendix}
\subsection*{Implementing move time as a secondary criterion}

To implement time discretization, we need to define the equivalence between two solutions in terms of arrival time.
For this, we propose to split the time horizon in consecutive intervals of identical length and consider two values lying in the same interval as equivalent. 
In our implementation each interval is two seconds long.

Given this definition, we modify the VP* algorithm as follows.
First, we add move time to the weights of the different graph arcs of the routing graph.
This allows for fast computation of the total move time of partial solutions by updating it incrementally as is done for the arrival time.
The total move time is stored in queue elements and is used to compare a tentative new element to all the elements previously added to the queue.
Instead of checking if it has already been added in the queue with the exact same values, we check, when all other values are equivalent, if an element with lower or equal move time has already been added to the queue.
When at destination, we update the minimum arrival time and the minimum move time if the current solution is better than the current best path in terms of arrival time or if it is equivalent and better in terms of move time.
For pruning, when checking if current heap element can improve on current best path, we ensure that the arrival time is lower or, if the arrival times are equivalent, that total move time is lower.

\subsection*{Experiments with move time as a secondary criterion}

To assess the impact of using discretized arrival time combined to move time as a secondary criterion, we redo some of the experiments of Section~\ref{sec:experiments} in this setting.

Table~\ref{tab:states_2crit} compares the number of states, running time and makespan between the previous mono-criterion implementation and the one with move time as a secondary criterion on Layout~0.

\begin{table}
    \setlength{\tabcolsep}{4pt}
    \centering
  \begin{tabular}{r|r||r|r||r|r||r|r}
    \multicolumn{2}{c||}{{}}& \multicolumn{2}{|c||}{{2 robots}} & \multicolumn{2}{|c||}{{3 robots}} & \multicolumn{2}{|c}{{4 robots}}\\
    \hline
    \multicolumn{2}{c||}{{Criteria}}& 2 & 1 & 2 & 1 & 2 & 1\\
    \hline
    \multirow{3}{0.9cm}{Visited states ($\times 10^3$)} 
& Q1 & 3 & 5 & 7 & 21 & 12 & 46\\
& Q2 & 5 & 10 & 12 & 75 & 18 & 74 \\
& Q3 & 7 & 22 & 16 & 122 & 28 & 130 \\ \hline
    
    \multirow{3}{0.9cm}{{Run time (s)}} 
    & Q1 & 4 & 9 & 11 & 38 & 27 & 90  \\ 
& Q2 &  7 & 17 & 18 & 75 & 47 & 137 \\ 
& Q3 &  10 & 39 & 24 & 122 & 79 & 289 \\ \hline
    \multirow{3}{0.9cm}{Make-span (min)} 
    & Q1 & 19 & 18 & 15 & 15 & 13 & 13   \\ 
& Q2 & 22 & 22 & 18 & 17 & 15 & 15 \\ 
& Q3 &  26 & 25 & 19 & 19 & 17 & 15 \\ 
  \end{tabular}
    \caption{Quartiles of number of states visited, run-time and makespan when using moving time as a secondary criterion vs. path time minimization only}
    \label{tab:states_2crit}
\end{table}

Due to the discretization, the number of states visited is reduced in the two-criteria implementation, which reduces the running time of the algorithm.
As for the makespan, we can see that it is hardly impacted.
Of course, having different paths implies different solutions at the level of the prioritized planning algorithm which computes paths repeatedly.
However, in our experiments, those solutions remain close in terms of makespan.

Table~\ref{table:layout_2_crit} shows the same results as for Table~\ref{table:layout} for the larger graphs of Figure~\ref{fig:layout}. 
We can see that, as for Layout~0, we have lower running times and similar regret and makespan metrics, which confirms similar scalability for the two-criteria implementation.

\begin{table}
  \setlength{\tabcolsep}{3pt}
  \centering
\resizebox{0.485\textwidth}{!}{%
  \begin{tabular}{r|r||r|r|r||r|r|r||r|r|r}
 
    \multicolumn{2}{c||}{}& \multicolumn{3}{|c||}{{2 robots}} & \multicolumn{3}{|c||}{{3 robots}} & \multicolumn{3}{|c}{{4 robots}}\\
    \hline
    \multicolumn{2}{c||}{Layout}& {1} & {2} & {3} & {1} & {2} & {3} & {1} & {2} & {3}\\
    \hline
    \multirow{3}{0.9cm}{Make-span (min)}
& Q1 & 23 & 23 & 24 & 18 & 18 & 19 & 16 & 15 & 16\\ 
& Q2 & 29 & 29 & 29 & 22 & 22 & 22 & 19 & 19 & 19\\ 
& Q3 & 32 & 32 & 34 & 25 & 24 & 24 & 23 & 22 & 23\\ \hline 
    \multirow{3}{0.9cm}{{Regret (\%)}}
  &  Q1 & 3.9 & 3.0 & 2.9 & 7.0 & 5.6 & 5.5 & 10.7 & 8.3 & 7.6\\ 
& Q2 & 5.9 & 4.5 & 3.8 & 10.0 & 7.7 & 7.1 & 13.3 & 10.8 & 9.5\\ 
& Q3 & 8.2 & 6.1 & 5.1 & 12.5 & 9.6 & 8.6 & 15.1 & 12.8 & 11.2\\ \hline 
    \multirow{3}{0.9cm}{{Run time (s)}}
    & Q1 & 23 & 18 & 16 & 68 & 51 & 56 & 127 & 131 & 149\\ 
& Q2 & 51 & 37 & 38 & 120 & 107 & 112 & 216 & 223 & 242\\ 
& Q3 & 76 & 67 & 68 & 207 & 165 & 193 & 322 & 343 & 379\\ 
  \end{tabular}}
  \caption{Quartiles of the makespan (in minutes), regret (in percent) and run time (in seconds) of the IPP algorithm for Layouts 1, 2 and~3.}\label{table:layout_2_crit}
\end{table}

\end{document}